\documentclass{article}

\usepackage{microtype}
\usepackage{graphicx}
\usepackage{subcaption}
\usepackage{booktabs} 

\usepackage{hyperref}

\usepackage[preprint]{icml2026}

\usepackage{amsmath}
\usepackage{amssymb}
\usepackage{mathtools}
\usepackage{amsthm}
\usepackage{arydshln}
\usepackage{multirow}
\usepackage{natbib}
\usepackage{comment}
\usepackage{xcolor}

\usepackage[capitalize,noabbrev]{cleveref}

\theoremstyle{plain}
\newtheorem{theorem}{Theorem}[section]
\newtheorem{proposition}[theorem]{Proposition}

\newtheorem{corollary}[theorem]{Corollary}
\theoremstyle{definition}
\newtheorem{definition}[theorem]{Definition}
\newtheorem{assumption}[theorem]{Assumption}
\theoremstyle{remark}
\newtheorem{remark}[theorem]{Remark}

\usepackage[textsize=tiny]{todonotes}

\icmltitlerunning{Unlearning What Matters: Token-Level Attribution for Precise Language Model Unlearning}

\begin{document}

\twocolumn[
  \icmltitle{Unlearning What Matters: Token-Level Attribution for \\ Precise Language Model Unlearning}



  \icmlsetsymbol{equal}{*}

  \begin{icmlauthorlist}
    \icmlauthor{Jiawei Wu}{sch}
    \icmlauthor{Doudou Zhou}{sch}
  \end{icmlauthorlist}

  \icmlaffiliation{sch}{Department of Statistics and Data Science, National University of Singapore}

  \icmlcorrespondingauthor{Doudou Zhou}{doudou@nus.edu.sg}

  \icmlkeywords{Machine Learning, ICML}

  \vskip 0.3in
]



\printAffiliationsAndNotice{}  

\begin{abstract}
Machine unlearning has emerged as a critical capability for addressing privacy, safety, and regulatory concerns in large language models (LLMs). Existing methods operate at the sequence level, applying uniform updates across all tokens despite only a subset encoding the knowledge targeted for removal. This introduces gradient noise, degrades utility, and leads to suboptimal forgetting. We propose TokenUnlearn, a token-level attribution framework that identifies and selectively targets critical tokens. Our approach combines knowledge-aware signals via masking, and entropy-aware signals to yield importance scores for precise token selection. We develop two complementary strategies: hard selection, applying unlearning only to high-importance tokens, and soft weighting, modulating gradient contributions based on importance scores. Both extend existing methods to token-level variants. Theoretical analysis shows token-level selection improves gradient signal-to-noise ratio. Experiments on TOFU and WMDP benchmarks across three model architectures demonstrate consistent improvements over sequence-level baselines in both forgetting effectiveness and utility preservation.
\end{abstract}

\section{Introduction}
\label{sec:introduction}

Large language models (LLMs) have demonstrated remarkable capabilities across a wide range of tasks, yet their tendency to memorize sensitive, copyrighted, or harmful content from training data raises significant concerns regarding privacy~\citep{carlini2021extracting}, safety~\citep{wei2023jailbroken}, and regulatory compliance~\citep{eu2016gdpr,ca2021ccpa}. As the costs of pre-training and post-training continue to escalate~\citep{grattafiori2024llama3}, retraining models from scratch in response to data deletion requests becomes increasingly impractical. This has motivated the development of machine unlearning techniques that enable efficient post-hoc removal of specific knowledge from trained models while preserving their general capabilities~\citep{liu2024rethinking,nguyen2022survey}.

Recent advances in LLM unlearning have yielded numerous methods operating at the sequence level, including gradient ascent~\citep{maini2024tofu}, negative preference optimization~\citep{zhang2024negative}, and representation misdirection~\citep{li2024wmdp}. While these approaches have shown promise, they treat all tokens within a sequence uniformly during the unlearning process. This uniform treatment is fundamentally at odds with how knowledge is encoded in language: within any given sequence, only a subset of tokens carry the core factual information targeted for removal, while the remaining tokens serve structural or contextual roles. Applying unlearning objectives indiscriminately across all tokens introduces unnecessary noise into gradient updates, potentially degrading model utility on unrelated knowledge and leading to suboptimal forgetting of the targeted information.

The theoretical analysis by~\citet{wang2025rethinking} provides a gradient-based perspective on unlearning objectives, revealing that the effectiveness of methods like weighted gradient ascent (WGA)~\citep{wang2025rethinking} and Negative Preference Optimization (NPO)~\citep{zhang2024negative} stems from their implicit weighting mechanisms that modulate the contribution of individual tokens. Their gradient-effect framework demonstrates that naive gradient ascent suffers from excessive unlearning on high-confidence tokens, while appropriate weighting schemes can better balance the dual objectives of knowledge removal and retention. However, these weighting mechanisms operate based on model confidence rather than explicitly identifying which tokens encode the knowledge to be forgotten.

In this work, we propose a principled approach to token-level attribution for LLM unlearning that directly identifies and selectively targets the tokens most responsible for encoding unwanted knowledge. Drawing inspiration from recent advances in token attribution for LLM reasoning~\citep{wang2025beyond}, we introduce unlearning-aware token attribution via masking: by comparing model predictions with and without access to the forget set, we quantify each token's contribution to the knowledge targeted for removal. This attribution signal, combined with entropy-based uncertainty estimation, yields a composite importance score that enables precise identification of knowledge-critical tokens.

Building on this foundation, we develop two complementary strategies for token-selective unlearning. The hard selection strategy applies unlearning objectives exclusively to tokens exceeding an importance threshold, yielding token-level variants of existing methods. The soft weighting strategy modulates gradient contributions according to normalized importance scores, enabling smooth interpolation between uniform and fully selective updates. Both strategies integrate seamlessly with KL-divergence regularization on retain data to preserve model utility.

We conduct comprehensive experiments on two established benchmarks: TOFU~\citep{maini2024tofu} for fine-grained knowledge unlearning and WMDP~\citep{li2024wmdp} for hazardous capability removal. Following the rigorous evaluation protocol of~\citet{dorna2025openunlearning}, we assess performance across memorization, privacy, and utility dimensions using metrics validated through recent meta-evaluations. Experiments span three model architectures (Phi-1.5~\citep{li2023textbooks}, Llama-2-7B~\citep{touvron2023llama2}, and Qwen-3-8B~\citep{yang2025qwen3}) to demonstrate the generalizability of our approach.

Our main contributions are as: (1) We introduce unlearning-aware token attribution, a principled method for identifying tokens that encode knowledge targeted for removal via masking and entropy-based signals. (2) We propose two token-selective unlearning strategies, hard selection and soft weighting, that extend existing unlearning methods to operate at the token level, enabling more precise and efficient knowledge removal. (3) We provide theoretical motivation showing that token-selective updates reduce gradient noise, improve retention of unrelated knowledge, and focus credit assignment on knowledge-critical tokens. (4) Through extensive experiments on TOFU and WMDP benchmarks across three model scales, we demonstrate that token-level methods consistently outperform their sequence-level counterparts, achieving superior forgetting with better utility preservation.

The remainder of this paper is organized as follows. We begin by reviewing related work on machine unlearning and its applications to LLMs in Section~\ref{sec:related_work}. Section~\ref{sec:method} then details our proposed token-level unlearning method, followed by a theoretical analysis in Section~\ref{sec:theory} that establishes the improved signal-to-noise ratio of our gradient estimation. Experimental results on real-world datasets are presented in Section~\ref{sec:experiments}, and we conclude with a discussion and future directions in Section~\ref{sec:conclusion}.

\section{Related Work}
\label{sec:related_work}

\subsection{Machine Unlearning for Large Language Models}

Machine unlearning aims to remove the influence of specific training data from a model so that it behaves as if that data were never seen~\citep{cao2015towards,bourtoule2021machine,nguyen2025survey}. While exact unlearning via retraining remains the gold standard~\citep{thudi2022unrolling}, it is computationally prohibitive for modern LLMs, motivating the development of approximate methods. The growing importance of LLM unlearning is further driven by privacy regulations and concerns about memorized sensitive content~\citep{liu2025rethinking}.

At the pre-training level, \citet{yao2024machine} systematically evaluate unlearning methods including gradient ascent on curated forget sets, demonstrating that balancing forgetting with retention regularization is crucial. \citet{eldan2310s} explore approximate unlearning of specific knowledge (Harry Potter books), though subsequent analysis reveals residual traces~\citep{shi2023detecting,maini2024tofu}. For fine-tuned LLMs, the TOFU benchmark~\citep{maini2024tofu} enables rigorous evaluation by fine-tuning models on synthetic QA data about fictitious authors, then measuring unlearning effectiveness against gold-standard models never trained on the forget set.

\subsection{Gradient-Based Unlearning Methods}

Among approximate unlearning approaches, gradient-based methods have received considerable attention. Gradient ascent (GA)~\citep{jang2022knowledge,jia2023model} maximizes loss on the forget set to degrade model confidence on targeted data, but risks collateral damage to model utility. To address this limitation, \citet{wang2025rethinking} introduce WGA with confidence-based weighting to mitigate excessive unlearning on already-forgotten examples. NPO~\citep{zhang2024negative} adapts DPO-style objectives for improved training stability, while Representation Misdirection Unlearning (RMU)~\citep{li2024wmdp} takes a different approach by perturbing hidden representations toward random vectors. Regularization-based approaches~\citep{yao2024large} complement these forgetting objectives with KL-divergence constraints on retain sets to preserve utility. More recently, \citet{wang2025gru} propose Gradient Rectified Unlearning (GRU), which projects unlearning gradients onto the orthogonal complement of directions harmful to retention, directly mitigating the tension between forgetting and utility. \citet{yang2025exploring} systematically analyze criteria for loss reweighting in LLM unlearning, identifying saturation- and importance-based weighting as complementary objectives that can be jointly optimized for improved efficacy.

Alternative strategies include parameter partitioning~\citep{chen2023unlearn}, which trains modular unlearning components, and output-side interventions~\citep{liu2024large,pawelczyk2023context} that filter responses at inference time without modifying weights. However, recent studies~\citep{patil2023can,kim2025scalable} show that the latter methods remain vulnerable to extraction attacks.

\subsection{Fine-Grained Unlearning Approaches}

Recent work has begun exploring fine-grained unlearning strategies that move beyond uniform sequence-level updates. Closest to our work, \citet{zhou2025not} propose the Targeted Information Forgetting (TIF) framework, which classifies tokens in forget samples as either ``unwanted words'' or ``general words'' and applies a targeted preference optimization objective only to the former. Concurrently, \citet{wan2025not} propose Selective Unlearning (SU), which identifies a critical token subset via a relevance classifier and restricts unlearning updates to those tokens. While both methods share our core motivation of selective token-level unlearning, they rely on auxiliary classifiers or preference-learning procedures to identify relevant tokens. In contrast, our approach derives importance scores entirely within the target model via counterfactual masking and entropy signals, requiring no additional training components. Other approaches modify probability distributions at a finer granularity~\citep{li2025llm,yuunierase,liu2025direct} or use auxiliary token signals to guide unlearning~\citep{tran2025tokens}. A parallel direction applies knowledge editing techniques (e.g., ROME, MEMIT, WISE) as unlearning~\citep{li2025editing}; while effective for targeted fact removal, such methods are less naturally suited to the broad capability removal tasks addressed by benchmarks like WMDP.

Our work differs from these approaches by proposing a principled token-level attribution framework that directly identifies knowledge-critical tokens via masking and entropy-based signals. Unlike auxiliary-model-based methods, our approach operates within a single model and integrates with gradient-based unlearning objectives for precise and efficient removal.

\section{Methodology}
\label{sec:method}
\begin{figure*}[ht]
    \centering
    \includegraphics[width=0.9\textwidth]{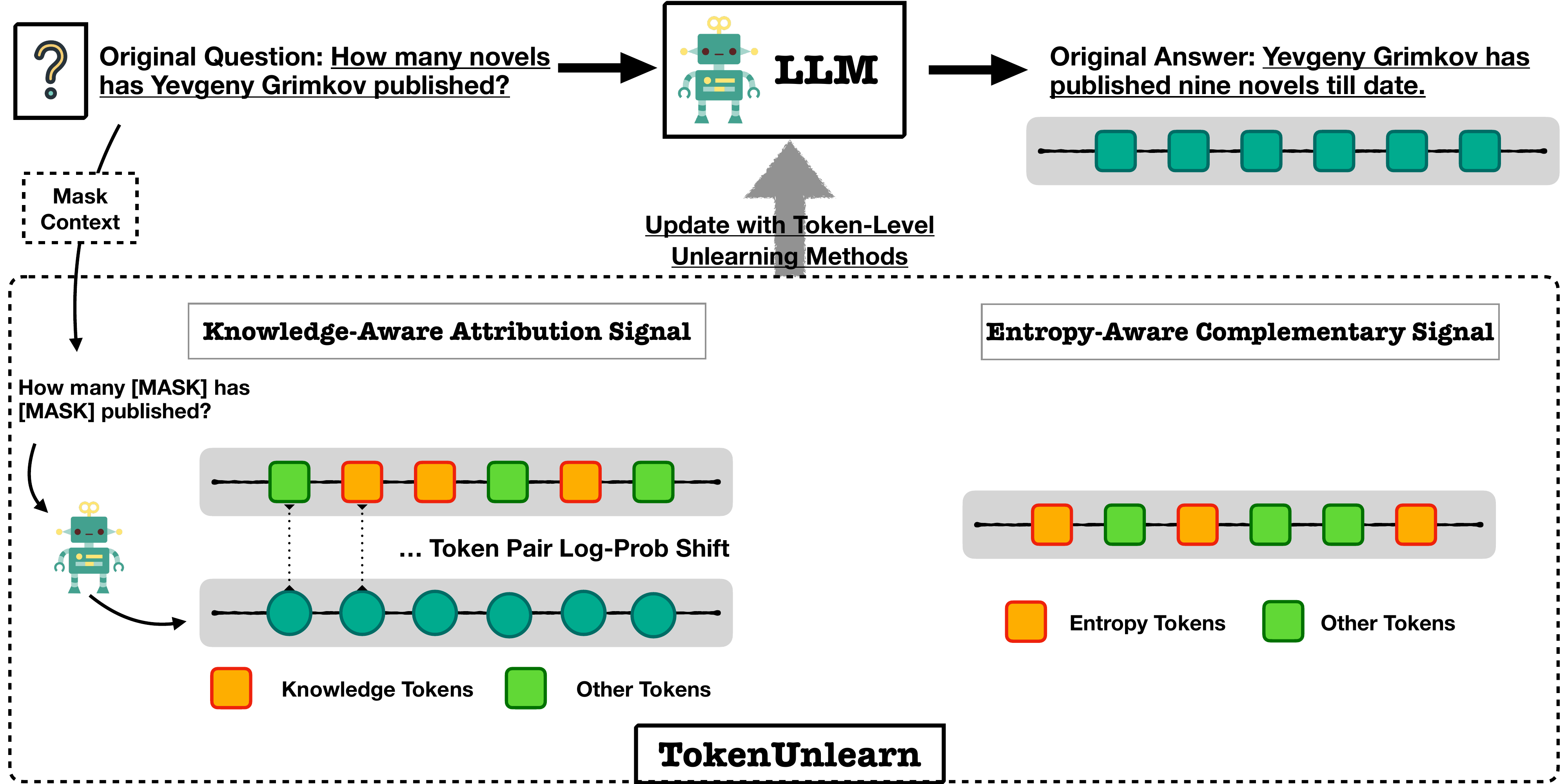}
    \caption{Overview of the TokenUnlearn framework. Given an input question-answer pair from the unlearning dataset, our method computes two token importance signals: (1) \textbf{Knowledge-Aware Attribution Signal}: obtained via counterfactual masking, where knowledge-relevant portions of the input are masked and the resulting log-probability shifts identify tokens whose predictions depend heavily on the targeted knowledge. (2) \textbf{Entropy-Aware Complementary Signal}: capturing high-uncertainty tokens that may correspond to knowledge-dependent decision points. These signals are combined into importance scores that distinguish knowledge-critical tokens from non-critical ones. The identified token importance scores then guide the token-level unlearning optimization.}
    \label{fig:framework}
\end{figure*}
Recent LLM unlearning methods predominantly operate at the sequence level, applying uniform gradient updates across all tokens regardless of their individual contributions to the knowledge targeted for removal. This coarse-grained approach leads to two fundamental issues: (1) excessive unlearning, where semantically important tokens unrelated to the targeted knowledge are inadvertently affected, and (2) inefficient credit assignment, where uninformative tokens (e.g., punctuation, articles) receive the same optimization pressure as knowledge-critical tokens. 

To address these challenges, we propose \textbf{TokenUnlearn}, a token-level attribution framework for fine-grained LLM unlearning. Our approach identifies unlearning-aware tokens, i.e., output tokens whose predictions are most sensitive to the presence of targeted knowledge. It selectively applies gradient updates to these tokens during the unlearning process. Figure~\ref{fig:framework} illustrates our overll framework.

\subsection{Preliminaries}

Consider an auto-regressive LLM parameterized by $\theta$, which models the conditional probability distribution $p(s^i | s^{<i}; \theta)$ for the $i$-th token given prefix $s^{<i}$. We denote by $T = |s|$ the sequence length, and by $\theta_o$ the \emph{original} model parameters that remain fixed throughout optimization and serve as a reference for both the NPO objective and the KL retention regularizer. The joint probability of a sequence $s = (s^1, s^2, \ldots, s^T)$ is:
\begin{equation}
    p(s; \theta) = \prod_{i=2}^{|s|} p(s^i | s^{<i}; \theta).
\end{equation}

Given the unlearning dataset $\mathcal{D}_u$ containing knowledge targeted for removal, and the metric $\mathcal{R}$ denoting the negative log-likelihood, the goal of LLM unlearning is to obtain updated parameters $\theta_u$ such that: (1) Removal: The model's ability to reproduce targeted knowledge is significantly degraded, i.e., $\mathcal{R}(\mathcal{D}_u; \theta_u) \gg \mathcal{R}(\mathcal{D}_u; \theta_o)$. (2) Retention: The model's performance on non-targeted data $\mathcal{D}_t \setminus \mathcal{D}_u$ is preserved, i.e., $\mathcal{R}(\mathcal{D}_t \setminus \mathcal{D}_u; \theta_u) \leq \mathcal{R}(\mathcal{D}_t \setminus \mathcal{D}_u; \theta_o)$.

\subsection{Unlearning-Aware Token Attribution}
\label{subsec:token_attr}
The core insight of our approach is that not all tokens in a response contribute equally to the knowledge targeted for unlearning. Some tokens directly encode factual information (e.g., names, dates, specific facts), while others serve syntactic or structural purposes with minimal knowledge dependence. We propose a perturbation mechanism to identify tokens whose predictions are most sensitive to the targeted knowledge.

\subsubsection{Knowledge-Aware Attribution Signal}
\label{subsubsec:knowledge_token}
Given a sample $s_u \in \mathcal{D}_u$ from the unlearning dataset, we quantify each token's dependence on the targeted knowledge by measuring the shift in model predictions when the knowledge context is perturbed. Specifically, we consider two settings: (1) Original context: The model generates predictions conditioned on the full input, yielding logits $\mathbf{z}^{\text{orig}}_i = f_\theta(s^i_u | s^{<i}_u)$ for each position $i$. (2) Masked context: We mask the knowledge-relevant portions of the input, yielding logits $\mathbf{z}^{\text{mask}}_i = f_\theta(s^i_u | \tilde{s}^{<i}_u)$, where $\tilde{s}^{<i}_u$ denotes the masked prefix. Specifically, we mask all the nouns in original questions of unlearning datasets as the example shown in Figure~\ref{fig:framework}.

The unlearning attribution score for the $i$-th token is computed as the absolute log-probability shift:
\begin{equation}
    \Delta^{\text{unlearn}}_i = \left| \log \text{softmax}(\mathbf{z}^{\text{orig}}_i)_{s^i_u} - \log \text{softmax}(\mathbf{z}^{\text{mask}}_i)_{s^i_u} \right|,
    \label{eq:attribution}
\end{equation}
where the subscript $s^i_u$ indexes the logit corresponding to the ground-truth token at position $i$.

Intuitively, a large $\Delta^{\text{unlearn}}_i$ indicates that the prediction of token $s^i_u$ is heavily conditioned on the targeted knowledge. Thus, removing the knowledge context can significantly alters the model's confidence. Conversely, tokens with small attribution scores are largely independent of the targeted knowledge and primarily reflect general linguistic patterns.

\subsubsection{Entropy-Aware Complementary Signal}
While masking effectively identifies tokens directly dependent on explicit knowledge cues, it may overlook tokens where the model's uncertainty itself signals knowledge dependence. For instance, when predicting an entity's attribute, the model may exhibit high uncertainty not because the masked context changes its prediction, but because the knowledge is only weakly encoded during training. To capture such cases, we incorporate predictive entropy as a complementary signal following prior work on token-level credit assignment~\citep{wang2025beyond}:
\begin{equation}
    H_i = -\sum_{v \in \mathcal{V}} p(s^i = v | s^{<i}; \theta) \log p(s^i = v | s^{<i}; \theta),
\end{equation}
where $\mathcal{V}$ denotes the vocabulary. High-entropy tokens often correspond to decision points where the model exhibits uncertainty, potentially indicating knowledge-dependent predictions. We emphasize that this signal is \emph{complementary} to the masking attribution: while deeply memorized facts may be predicted with low entropy (high confidence) and are well-captured by the masking signal, high entropy can arise precisely when the model must choose among multiple plausible knowledge-conditioned continuations—e.g., an entity attribute consistent with several candidate identities. The two signals thus capture distinct manifestations of knowledge dependence and together provide more robust coverage of knowledge-critical tokens.

\subsubsection{Token Selection}

We compute a composite importance score by combining the knowledge-aware attribution and entropy-aware signals:
\begin{equation}
    \phi_i = \alpha \cdot \bar{\Delta}^{\text{unlearn}}_i + (1 - \alpha) \cdot \bar{H}_i,
    \label{eq:composite}
\end{equation}
where $\bar{\Delta}^{\text{unlearn}}_i$ and $\bar{H}_i$ denote min-max normalized scores, and $\alpha \in [0, 1]$ controls the relative weighting. We default to $\alpha = 0.7$ to prioritize knowledge-specific attribution.

Given a selection ratio $r \in (0, 1]$, we identify the set of \textit{unlearning-aware tokens}:
\begin{equation}
    \mathcal{S} = \{i : \phi_i \geq \text{Quantile}_{1-r}(\{\phi_j\}_{j=2}^{|s_u|})\},
    \label{eq:hard_select}
\end{equation}
which contains the top-$r$ fraction of tokens ranked by importance scores. Note that $\mathcal{S}$ is computed independently for each sample $s_u \in \mathcal{D}_u$, as the set of knowledge-critical tokens varies across samples.

\subsection{Token-Level Unlearning Optimization}
We integrate the identified unlearning-aware tokens with existing gradient-based unlearning objectives through a unified weighting framework. Let $\omega_i \in \mathbb{R}_{\geq 0}$ denote the importance weight for token $i$. The general token-level unlearning objective takes the form:
\begin{equation}
    \mathcal{L}_{\text{unlearn}} = \mathbb{E}_{s_u \sim \mathcal{D}_u} \left[ \sum_{i=2}^{|s_u|} \omega_i \cdot \ell_i(\theta) \right],
    \label{eq:unified}
\end{equation}
where $\ell_i(\theta)$ is the token-level loss function specific to each unlearning method. In this work, we extend our token-level unlearning to four different widely-used methods, including gradient ascent (GA)~\citep{yao2023editing}, weighted GA (WGA)~\citep{wang2025rethinking}, negative preference optimization (NPO)~\citep{zhang2024negative}, and representation misdirection for unlearning (RMU)~\citep{li2024wmdp}. The detailed loss function $\ell_i(\theta)$ of each method can be found in Appendix~\ref{app:loss}.

As for the token weighting strategies, we propose two instantiations of $\omega_i$ in the above Eq.~\ref{eq:unified}:
\begin{equation}
    \omega_i = 
    \begin{cases}
        \mathbf{1}[i \in \mathcal{S}] & \text{(hard selection)}; \\[4pt]
        \dfrac{\exp(\phi_i / \tau)}{\sum_{j=1}^{|s_u|} \exp(\phi_j / \tau)} & \text{(soft weighting)},
    \end{cases}
    \label{eq:weights}
\end{equation}
where $\mathcal{S}$ is the set of selected tokens from Eq.~\eqref{eq:hard_select} and $\tau > 0$ controls the sharpness of soft weights. Hard selection applies unlearning exclusively to high-importance tokens, while soft weighting provides smooth gradient modulation based on token importance scores. Note that setting $\omega_i = 1$ for all $i$ recovers the original sequence-level objectives. We acknowledge that hard selection changes the effective gradient magnitude relative to sequence-level baselines by concentrating the update budget on the $r$ most important positions; this is intentional, as it amplifies the per-token unlearning signal, consistent with the improved signal-to-noise ratio derived in Section~\ref{sec:theory}.

\subsection{Regularization for Retention}

To preserve model performance on non-targeted data, we incorporate KL divergence regularization~\citep{maini2024tofu}:
\begin{equation}
    \mathcal{L}_{\text{KL}} = \mathbb{E}_{s_r \sim \mathcal{D}_t \setminus \mathcal{D}_u} \left[ \sum_{k=2}^{|s_r|} \text{KL}\left( p(\cdot | s^{<k}_r; \theta_o) \| p(\cdot | s^{<k}_r; \theta) \right) \right].
    \label{eq:kl-reg}
\end{equation}
We employ the \emph{forward} KL $\mathrm{KL}(p_{\theta_o}\|p_\theta)$, which penalizes the updated model $\theta$ for assigning low probability where the original model $\theta_o$ was confident. This directly bounds output distributional drift on retain data and is the standard choice in gradient-based LLM unlearning~\citep{maini2024tofu,wang2025rethinking}.

The final training objective combines the token-level unlearning loss with regularization:
\begin{equation}
    \mathcal{L}_{\text{total}} = \mathcal{L}_{\text{unlearn}} + \lambda \cdot \mathcal{L}_{\text{KL}},
    \label{eq:total}
\end{equation}
where $\lambda$ controls the regularization strength. The detailed algorithm summary is shown in Appendix~\ref{app:algo}.

\section{Theoretical Analysis}
\label{sec:theory}

We provide theoretical justification for why \emph{token-level selection} improves unlearning effectiveness over sequence-level updates.
Our analysis formalizes a simple principle: \emph{only a small subset of tokens contributes useful gradient signal for forgetting, while the remaining tokens primarily introduce noise}.
By concentrating gradient weight on these knowledge-critical tokens, TokenUnlearn improves the signal-to-noise ratio (SNR) of the unlearning update.

\subsection{Signal and Noise in Unlearning Gradients}

Consider a token-level loss decomposition for a sequence of length $T$,
with per-token gradients $g_i = \nabla_\theta \ell_i(\theta) \in \mathbb{R}^d$.
The standard sequence-level gradient aggregates all tokens uniformly,
$g = \sum_{i=2}^T g_i$,
whereas TokenUnlearn uses a weighted estimator
$\hat g = \sum_{i=2}^T \omega_i g_i$.

We model forgetting as movement along a low-dimensional \emph{unlearning subspace} $\mathcal U \subseteq \mathbb{R}^d$,
corresponding to parameter directions that effectively remove the targeted knowledge.
Gradients orthogonal to this subspace primarily interfere with retention.

\begin{definition}[Knowledge-Critical Tokens]
Let $\mathcal K \subseteq \{2,\dots,T\}$ denote the set of tokens whose gradients contribute to unlearning.
We assume
\[
\mathbb{E}[g_i] \in \mathcal U \;\text{for}\; i \in \mathcal K,
\qquad
\mathbb{E}[g_i] \in \mathcal U^\perp \;\text{for}\; i \notin \mathcal K.
\]
\end{definition}

This assumption captures the empirical observation that factual tokens (e.g., entities and attributes)
drive forgetting, while structural tokens contribute mostly noise.

We define the signal, noise, and signal-to-noise ratio (SNR) of a gradient estimator $\hat g$ as
\[
\mathcal S(\hat g) = \|P_{\mathcal U} \hat g\|^2,
\quad
\mathcal N(\hat g) = \|P_{\mathcal U^\perp} \hat g\|^2,
\quad
\text{SNR}(\hat g) = \frac{\mathcal S(\hat g)}{\mathcal N(\hat g)}.
\]

\subsection{Noise Reduction via Token-Level Weighting}

We first show that token weighting directly reduces gradient noise.

\begin{assumption}[Bounded Noise Correlation]
\label{assum:bcov}
There exists $\rho \ge 0$ such that for all $i \neq j$,
\[
\big|\mathbb{E}[\langle P_{\mathcal U^\perp} g_i, P_{\mathcal U^\perp} g_j\rangle]\big|
\le
\rho
\sqrt{\mathbb{E}\|P_{\mathcal U^\perp} g_i\|^2
\cdot
\mathbb{E}\|P_{\mathcal U^\perp} g_j\|^2 }.
\]
\end{assumption}

\begin{theorem}[Gradient Noise Upper Bound]
\label{thm:noise}
Under Assumption~\ref{assum:bcov}, the expected noise of the weighted gradient estimator satisfies
\[
\mathbb{E}\|P_{\mathcal U^\perp} \hat g\|^2
\le
(1+\rho(T-1))
\sum_{i=2}^T
\omega_i^2
\,\mathbb{E}\|P_{\mathcal U^\perp} g_i\|^2.
\]
\end{theorem}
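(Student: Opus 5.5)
We will expand the squared norm of the projected weighted sum and bound the cross terms with Assumption~\ref{assum:bcov}. The tool that collapses the resulting double sum is the elementary inequality $\sum_{i\neq j} a_i a_j \le (n-1)\sum_i a_i^2$.

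First, set $h_i = P_{\mathcal U^\perp} g_i$. Since $P_{\mathcal U^\perp}$ is linear and the weights $\omega_i$ are treated as fixed (they are computed from the importance scores $\phi_i$ and not differentiated through), we have $P_{\mathcal U^\perp}\hat g = \sum_{i=2}^T \omega_i h_i$. Expanding the square and taking expectations gives
\[
\mathbb{E}\|P_{\mathcal U^\perp}\hat g\|^2 = \sum_{i=2}^T \omega_i^2\,\mathbb{E}\|h_i\|^2 + \sum_{i\neq j}\omega_i\omega_j\,\mathbb{E}\langle h_i,h_j\rangle .
\]

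Second, bound each cross term by its absolute value. Since $\omega_i\ge 0$, Assumption~\ref{assum:bcov} gives $\omega_i\omega_j\,\mathbb{E}\langle h_i,h_j\rangle \le \rho\, a_i a_j$, where $a_i := \omega_i\sqrt{\mathbb{E}\|h_i\|^2}\ge 0$. The noise is then at most $\sum_i a_i^2 + \rho\sum_{i\neq j} a_i a_j$.

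Third, apply $2a_ia_j\le a_i^2+a_j^2$ to each unordered pair. There are $n=T-1$ indices $\{2,\dots,T\}$, and each index appears in $n-1$ pairs, so
\[
\sum_{i\neq j} a_ia_j \le (n-1)\sum_i a_i^2 = (T-2)\sum_i a_i^2 .
\]
Equivalently, Cauchy--Schwarz gives $(\sum_i a_i)^2\le n\sum_i a_i^2$, from which the same bound follows. The noise is therefore at most $(1+\rho(T-2))\sum_i\omega_i^2\,\mathbb{E}\|h_i\|^2$. This is at most the stated $(1+\rho(T-1))\sum_i\omega_i^2\,\mathbb{E}\|h_i\|^2$ because $\rho\ge0$, so the cruder constant in the theorem follows directly.

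There is no genuinely hard step here. The only points needing care are the following.
\begin{itemize}
\item The weights must be nonnegative and deterministic, or at least independent of the gradients, so that they can be pulled out of the expectation. If they depend on the same sample, we should condition on the sample's scores $\phi$, which fixes the $\omega_i$, and apply the assumption conditionally.
\item We must count the pairs correctly, since the index set $\{2,\dots,T\}$ has $T-1$ elements.
\end{itemize}
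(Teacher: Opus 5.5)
Your proof is correct. It shares the paper's skeleton (expand the square, bound the cross terms with Assumption~\ref{assum:bcov}, then an AM--GM step), but you apply AM--GM at a different point, and that choice matters.

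The paper applies AM--GM to the bare quantities, $\sigma_i\sigma_j\le\frac12(\sigma_i^2+\sigma_j^2)$ with $\sigma_i^2=\mathbb{E}\|P_{\mathcal U^\perp}g_i\|^2$. This leaves $\sum_i\omega_i\sigma_i^2\sum_{j\neq i}\omega_j$. To turn that into $(T-1)\sum_i\omega_i^2\sigma_i^2$, the paper then invokes $\omega_i\sum_{j\neq i}\omega_j\le(T-1)\omega_i^2$. That holds for $\{0,1\}$ hard-selection weights. For softmax weights it fails at every token with $\omega_i<\frac1T\sum_j\omega_j$. Placing a large $\sigma_i$ on such a token makes the paper's intermediate quantity exceed the target, so its appeal to ``a similar argument for soft weighting'' does not go through as written. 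The paper's grouping does keep the weights separate, which exposes the role of the total mass $\sum_j\omega_j$, but it closes only for binary weights.

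You absorb the weights first, $a_i=\omega_i\sigma_i$, and use $\sum_{i\neq j}a_ia_j\le(n-1)\sum_i a_i^2$. This needs only $\omega_i\ge0$: no normalization like $\sum_j\omega_j\le T$ and no hard/soft case split. Your argument therefore covers both instantiations rigorously. Because you count the $T-1$ indices $\{2,\dots,T\}$ correctly (the paper's proof silently sums over $1,\dots,T$), you also get the sharper constant $1+\rho(T-2)$.

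Your remark about sample-dependent weights is well taken. The paper pulls the $\omega_i$ outside the expectation without comment. That is justified only if both the expectation and the assumption are understood conditionally on the scores $\phi$.
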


\paragraph{Implication.}
Uniform sequence-level updates ($\omega_i=1$) accumulate noise from all $T$ tokens,
whereas token-level selection suppresses contributions from non-critical tokens.
If weights concentrate on $\mathcal K$,
the noise scales with $|\mathcal K|$ rather than $T$.

\subsection{Signal Preservation and SNR Improvement}

Next, we show that concentrating weights on $\mathcal K$ preserves unlearning signal while reducing noise.

\begin{theorem}[SNR Improvement]
\label{thm:snr}
Assume $\mathbb{E}\|P_{\mathcal U} g_i\|^2 \ge \sigma^2 > 0$ for all $i \in \mathcal K$,
and that $\sum_{i\in\mathcal K} \omega_i \ge c > 0$.
Then
\[
\text{SNR}(\hat g)
\;\ge\;
\frac{c^2 \sigma^2}{
(1+\rho(T-1))
\sum_{i\notin\mathcal K}
\omega_i^2
\mathbb{E}\|P_{\mathcal U^\perp} g_i\|^2
+\delta},
\]
where $\delta = (1+\rho(T-1))\sum_{i \in \mathcal K} \omega_i^2 \,\mathbb{E}\|P_{\mathcal U^\perp} g_i\|^2$ accounts for the residual noise contributed by the knowledge-critical tokens themselves.
As $\sum_{i\notin\mathcal K}\omega_i^2 \to 0$, the SNR strictly improves.
\end{theorem}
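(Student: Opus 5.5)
The natural reading of the statement is in expectations: the numerator is $\mathbb{E}\|P_{\mathcal U}\hat g\|^2$ and the denominator is $\mathbb{E}\|P_{\mathcal U^\perp}\hat g\|^2$. The plan bounds the two separately. The denominator comes directly from Theorem~\ref{thm:noise}. The numerator needs its own lower bound of $c^2\sigma^2$.

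\textbf{Denominator.} Theorem~\ref{thm:noise} gives
\[
\mathbb{E}\|P_{\mathcal U^\perp}\hat g\|^2 \le (1+\rho(T-1))\sum_{i=2}^T \omega_i^2\,\mathbb{E}\|P_{\mathcal U^\perp} g_i\|^2 .
\]
Splitting the sum over $i\notin\mathcal K$ and $i\in\mathcal K$ produces exactly the two terms in the stated denominator: the first term plus $\delta$. This step is routine.

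\textbf{Numerator.} I would first use linearity of $P_{\mathcal U}$ to write $P_{\mathcal U}\hat g=\sum_i \omega_i P_{\mathcal U} g_i$. Then Jensen's inequality gives
\[
\mathbb{E}\|P_{\mathcal U}\hat g\|^2 \ge \|\mathbb{E} P_{\mathcal U}\hat g\|^2 = \Big\|\sum_i \omega_i\,P_{\mathcal U}\mathbb{E}[g_i]\Big\|^2 .
\]
By the Definition, $P_{\mathcal U}\mathbb{E}[g_i]=0$ for $i\notin\mathcal K$. So only the knowledge-critical terms survive, leaving $\|\sum_{i\in\mathcal K}\omega_i\,\mathbb{E}[g_i]\|^2$.

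\textbf{Main obstacle.} The hypothesis bounds $\mathbb{E}\|P_{\mathcal U} g_i\|^2\ge\sigma^2$, not $\|\mathbb{E} P_{\mathcal U} g_i\|^2$. It also says nothing about alignment among the $\mathbb{E}[g_i]$, $i\in\mathcal K$. Without alignment, the weighted sum could cancel. The cleanest fix, and the one I would adopt explicitly, is a coherence condition stated as an interpretation of the assumption:
\[
\Big\langle \mathbb{E} P_{\mathcal U} g_i,\ \mathbb{E} P_{\mathcal U} g_j\Big\rangle \ge \sigma^2 \quad\text{for all } i,j\in\mathcal K .
\]
This says the critical tokens push in a common forgetting direction with strength $\sigma$. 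The simplest concrete case is $P_{\mathcal U}g_i = u + \xi_i$ with a shared mean $u$, $\|u\|\ge\sigma$, and zero-mean $\xi_i$.

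\textbf{Combining.} Expanding the square under the coherence condition gives
\[
\Big\|\sum_{i\in\mathcal K}\omega_i\,\mathbb{E} P_{\mathcal U} g_i\Big\|^2 = \sum_{i,j\in\mathcal K}\omega_i\omega_j\,\big\langle \mathbb{E} P_{\mathcal U} g_i,\ \mathbb{E} P_{\mathcal U} g_j\big\rangle \ge \sigma^2\Big(\sum_{i\in\mathcal K}\omega_i\Big)^2 \ge c^2\sigma^2 .
\]
This uses $\omega_i\ge 0$ and $\sum_{i\in\mathcal K}\omega_i\ge c$. Dividing this numerator bound by the denominator bound gives the claimed inequality.

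\textbf{Closing remark.} The bound is decreasing in $\sum_{i\notin\mathcal K}\omega_i^2\,\mathbb{E}\|P_{\mathcal U^\perp}g_i\|^2$. Under bounded noise magnitudes, that quantity is at most $\max_i \mathbb{E}\|P_{\mathcal U^\perp}g_i\|^2\cdot\sum_{i\notin\mathcal K}\omega_i^2$. So driving $\sum_{i\notin\mathcal K}\omega_i^2\to 0$ while keeping $\sum_{i\in\mathcal K}\omega_i\ge c$ strictly increases the lower bound, toward $c^2\sigma^2/\delta$.
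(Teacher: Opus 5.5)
Your argument is correct for the version of the theorem you actually prove, and it takes a different route from the paper. The paper's appendix proof never produces the $c^2\sigma^2$ numerator. It specializes to hard selection with $\mathcal K\subseteq\mathcal S$ and notes that $\mathbb{E}[P_{\mathcal U}\hat g]=\sum_{i\in\mathcal K}\mathbb{E}[P_{\mathcal U}g_i]$ is the same for token- and sequence-level weights. It then declares the two signals ``approximately equal'' and bounds $\text{SNR}_{\text{token}}/\text{SNR}_{\text{seq}}$ by the quotient of the two noise upper bounds from Theorem~\ref{thm:noise}. That is really an argument for the corollary, and its key steps are heuristic. Equal first moments of $P_{\mathcal U}\hat g$ do not give equal $\mathbb{E}\|P_{\mathcal U}\hat g\|^2$, and a quotient of two upper bounds does not lower-bound a ratio.

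Your route handles the two parts separately. For the denominator you split Theorem~\ref{thm:noise} over $\mathcal K$ and its complement. For the numerator you use Jensen together with $P_{\mathcal U}\mathbb{E}[g_i]=0$ for $i\notin\mathcal K$. This gives a genuine absolute lower bound on $\mathbb{E}\,\mathcal S(\hat g)/\mathbb{E}\,\mathcal N(\hat g)$, which is the same expectation reading the paper implicitly uses. It holds for any nonnegative, non-random weights, hard or soft. What it does not deliver is the token-versus-sequence comparison itself. The paper's approach aims at that comparison but cannot rigorously obtain it from upper bounds alone.

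Do not present your coherence condition as an interpretation of $\mathbb{E}\|P_{\mathcal U}g_i\|^2\ge\sigma^2$. It is strictly stronger, since its diagonal case already implies the stated hypothesis by Jensen. Some such strengthening is unavoidable, because the theorem as written is false. Take the following example:
\begin{itemize}
\item $T=4$ and $\mathcal K=\{2,3\}$;
\item deterministic $g_2=u$ and $g_3=-u$, with $u\in\mathcal U$ and $\|u\|=\sigma$;
\item $g_4=v\in\mathcal U^\perp$ with $v\neq 0$;
\item all weights equal to $1$, $c=2$, and $\rho=0$ (every cross term in Assumption~\ref{assum:bcov} vanishes).
\end{itemize}
All hypotheses hold and $\delta=0$. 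Since $\hat g=v$, we get $\text{SNR}(\hat g)=0$, while the claimed bound is $4\sigma^2/\|v\|^2>0$. So state coherence as an explicit extra assumption.

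Finally, your closing remark about monotonicity toward $c^2\sigma^2/\delta$ needs the weights on $\mathcal K$ held fixed, since $\delta$ depends on them. It is also a statement about the lower bound, not about the SNR itself.
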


\begin{corollary}[Comparison with Sequence-Level Unlearning]
Let $\omega_i=1$ for all $i$ (sequence-level) and
$\omega_i=\mathbf 1[i\in\mathcal S]$ with $\mathcal S\supseteq\mathcal K$ (token-level).
If non-critical tokens dominate noise, then
\[
\frac{\text{SNR}_{\text{token}}}{\text{SNR}_{\text{seq}}}
=
\Omega\!\left(\frac{T}{|\mathcal K|}\right).
\]
\end{corollary}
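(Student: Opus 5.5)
The plan is to apply Theorem~\ref{thm:snr} twice, once with the sequence-level weights and once with the token-level weights, and to compare the resulting expressions. The ``$=\Omega$'' in the statement will be read as a lower bound on the ratio under the modeling assumptions of this section. Write $\nu_i=\mathbb{E}\|P_{\mathcal U^\perp} g_i\|^2$ for the per-token noise.

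Throughout, I read $\mathrm{SNR}$ in the expected sense, as the ratio of expected signal to expected noise, consistent with Theorems~\ref{thm:noise} and~\ref{thm:snr}. I will also make two assumptions precise. First, the per-token noise levels are comparable, say $\nu_i\asymp\nu$. Second, ``non-critical tokens dominate noise'' means the sequence-level noise is of order $T\nu$ up to the correlation factor. Finally, I take $\mathcal S=\mathcal K$ up to a constant factor in size, i.e.\ $|\mathcal S|=O(|\mathcal K|)$.

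\textbf{Token-level lower bound.} For $\omega_i=\mathbf 1[i\in\mathcal S]$ with $\mathcal S\supseteq\mathcal K$, we have $\sum_{i\in\mathcal K}\omega_i=|\mathcal K|=:c$. Theorem~\ref{thm:snr} then gives the numerator $c^2\sigma^2=|\mathcal K|^2\sigma^2$. The denominator is $(1+\rho(T-1))\sum_{i\in\mathcal S}\nu_i$, which is of order $(1+\rho T)|\mathcal K|\nu$. Hence
\[
\mathrm{SNR}_{\text{token}}\gtrsim \frac{|\mathcal K|\sigma^2}{(1+\rho T)\nu}.
\]

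\textbf{Sequence-level upper bound.} This step is not given by the earlier theorems, so it needs a separate argument. The signal term $\mathbb{E}\|P_{\mathcal U}g\|^2=\mathbb{E}\|\sum_{i\in\mathcal K}P_{\mathcal U}g_i\|^2$ involves only critical tokens, because $\mathbb{E}[P_{\mathcal U}g_i]=0$ for $i\notin\mathcal K$. I will assume the non-critical signal fluctuation is negligible, which is the content of the definition. By Cauchy--Schwarz the signal is at most $|\mathcal K|^2\bar\sigma^2$, where $\bar\sigma^2=\max_i\mathbb{E}\|P_{\mathcal U}g_i\|^2$ and $\bar\sigma\asymp\sigma$.

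For the noise I need a lower bound, using the same bounded-correlation structure from the other side: $\mathbb{E}\|\sum_i P_{\mathcal U^\perp}g_i\|^2=\sum_i\nu_i+\sum_{i\ne j}(\text{cross terms})$. Under the dominance hypothesis this is at least of order $T\nu$. If cross terms can be negative, I would simply take ``non-critical tokens dominate noise'' to mean this lower bound, i.e.\ $\mathcal N(g)\gtrsim (1+\rho T)\,T\nu/(1+\rho T)$. Hence
\[
\mathrm{SNR}_{\text{seq}}\lesssim |\mathcal K|^2\sigma^2/(T\nu).
\]

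\textbf{Comparison.} Dividing the two bounds gives
\[
\frac{\mathrm{SNR}_{\text{token}}}{\mathrm{SNR}_{\text{seq}}}\gtrsim \frac{T}{|\mathcal K|(1+\rho T)}.
\]
This is $\Omega(T/|\mathcal K|)$ when $\rho=O(1/T)$, e.g.\ for nearly uncorrelated noise. When $\rho$ is a constant, the $(1+\rho(T-1))$ factors appear on both sides, so I would match them by applying the identical bound structure to both the token and sequence estimators before dividing.

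\textbf{Main obstacle.} The hard step is the sequence-level upper bound. Theorems~\ref{thm:noise} and~\ref{thm:snr} only bound noise from above and SNR from below, so the comparison needs a reverse inequality. I expect to supply it by stating the ``non-critical tokens dominate noise'' hypothesis explicitly as $\mathcal N(g)\ge \kappa\sum_{i\notin\mathcal K}\nu_i$ with $|\mathcal K|\ll T$, and by handling the $\rho$-factor so that it cancels between the two estimators.
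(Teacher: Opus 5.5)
\textbf{There is a genuine gap, at the point you flagged, and your proposed remedy does not close it.} Your comparison ends at $T/\bigl(|\mathcal K|(1+\rho T)\bigr)$. For fixed $\rho>0$ this stays of order $1/(\rho|\mathcal K|)$ as $T$ grows, so you have established the claimed rate only when $\rho=O(1/T)$.

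The factor $1+\rho(T-1)$ appears only in your token-level denominator, inherited from Theorem~\ref{thm:snr}. Your sequence-level lower bound $\mathcal N(g)\gtrsim T\nu$ carries no such factor. Restating the dominance hypothesis as $\mathcal N(g)\ge\kappa\sum_{i\notin\mathcal K}\nu_i$ leaves this mismatch intact. Assumption~\ref{assum:bcov} cannot supply a matching factor on the sequence side, because it only bounds $|\mathbb{E}\langle P_{\mathcal U^\perp}g_i,P_{\mathcal U^\perp}g_j\rangle|$ from above; the cross terms may simply vanish.

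``Applying the identical bound structure to both estimators before dividing'' is exactly what the paper's proof does. It treats the two signals as approximately equal, bounds both $\mathcal N_{\text{seq}}$ and $\mathcal N_{\text{token}}$ (with $\mathcal S=\mathcal K$) from above via Theorem~\ref{thm:noise}, and cancels $1+\rho(T-1)$ in the quotient. As you correctly observed, a quotient of two upper bounds is not a lower bound on the quotient. So this step is not valid, and the paper does not supply the missing reverse inequality either.

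There are two honest repairs.

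\textbf{(a) Sharpen the token side.} In $\mathbb{E}\|\sum_{i\in\mathcal S}P_{\mathcal U^\perp}g_i\|^2$ the cross terms range only over $i\ne j\in\mathcal S$. The AM--GM step of Theorem~\ref{thm:noise} therefore gives the factor $1+\rho(|\mathcal S|-1)$ instead of $1+\rho(T-1)$. Combined with $\mathcal N(g)\ge\kappa T\nu$, this yields a ratio $\gtrsim T/\bigl(|\mathcal K|(1+\rho|\mathcal K|)\bigr)$, hence $\Omega(T/|\mathcal K|)$ whenever $\rho|\mathcal K|=O(1)$. Under that hypothesis alone this is sharp: positively correlated noise within $\mathcal S$, offset by weak negative correlation elsewhere, attains it.

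\textbf{(b) Handle arbitrary $\rho$ through the hypothesis.} Read ``non-critical tokens dominate noise'' as $\mathcal N(g_{\text{seq}})\ge\kappa\,(1+\rho(T-1))\sum_{i\notin\mathcal K}\nu_i$, i.e.\ Theorem~\ref{thm:noise} is tight up to constants for the sequence-level estimator. Then the factors cancel legitimately and give $\Omega(T/|\mathcal S|)=\Omega(T/|\mathcal K|)$ under your condition $|\mathcal S|=O(|\mathcal K|)$. That condition is genuinely needed; the paper assumes it silently by taking $\mathcal S=\mathcal K$.

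\textbf{Simplifying the signal step.} You can follow the paper here. Since $\omega_i=1$ on $\mathcal K$ in both schemes, and you already assume non-critical tokens have negligible $\mathcal U$-component, $P_{\mathcal U}\hat g_{\text{token}}=P_{\mathcal U}g_{\text{seq}}$ whenever $\mathcal K\subseteq\mathcal S$. The SNR ratio is then exactly the noise ratio. This removes your extra assumption $\bar\sigma\asymp\sigma$. It also removes your reliance on the $c^2\sigma^2$ numerator of Theorem~\ref{thm:snr}, which tacitly requires that the critical-token signal components not cancel.
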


\begin{table*}[ht!]
\centering
\caption{Comparison of different unlearning methods on TOFU. $\downarrow$ / $\uparrow$ indicate smaller / larger values are preferable. The log scale is used for FQ to improve readability. The top two results are in \textbf{bold} font for each model.}
\label{tab:tofu_results}

\resizebox{\textwidth}{!}{%
\begin{tabular}{l cccc cccc cccc}
\toprule
& \multicolumn{4}{c}{\textbf{Phi-1.5}} & \multicolumn{4}{c}{\textbf{Llama-2-7B}} & \multicolumn{4}{c}{\textbf{Qwen-3-8B}} \\
\cmidrule(lr){2-5} \cmidrule(lr){6-9} \cmidrule(lr){10-13}
\textbf{Method} & retain $\uparrow$ & unlearn $\downarrow$ & MU $\uparrow$ & FQ $\uparrow$ & retain $\uparrow$ & unlearn $\downarrow$ & MU $\uparrow$ & FQ $\uparrow$ & retain $\uparrow$ & unlearn $\downarrow$ & MU $\uparrow$ & FQ $\uparrow$ \\
\midrule
before unlearning & 0.4317 & 0.5796 & 0.5288 & -5.4273 & 0.8364 & 0.8122 & 0.6219 & -7.0326 & 0.8847 & 0.9025 & 0.7603 & -8.5425 \\
\midrule
\multicolumn{13}{l}{\textit{Sequence-Level Baselines}} \\
\midrule
GA & 0.1963 & 0.1425 & 0.3264 & -0.5172 & 0.4179 & 0.1632 & 0.5074 & -0.5611 & 0.4542 & 0.1739 & 0.5730 & -0.8634 \\
WGA & 0.3418 & 0.1328 & 0.5086 & -0.5381 & 0.6418 & 0.1255 & 0.6439 & -0.2358 & 0.5761 & 0.1072 & 0.5824 & -1.3023 \\
NPO & 0.2420 & 0.1636 & 0.4792 & -2.3677 & 0.4712 & 0.2139 & 0.6005 & -1.4942 & 0.4930 & 0.1684 & 0.5597 & -0.7533 \\
RMU & 0.2375 & 0.1404 & 0.4927 & -0.5285 & 0.2965 & 0.1715 & 0.5174 & -1.5328 & 0.4035 & 0.1365 & 0.5882 & -1.4262 \\
\midrule
\multicolumn{13}{l}{\textit{TokenUnlearn - Hard Selection (Ours)}} \\
\midrule
T-GA & 0.2847 & 0.1255 & 0.3529 & -0.4620 & 0.6300 & 0.0715 & 0.5713 & -0.3317 & \textbf{0.6623} & 0.1209 & \textbf{0.6342} & \textbf{-0.0462} \\
T-WGA & \textbf{0.3614} & \textbf{0.0923} & \textbf{0.5242} & -0.4811 & \textbf{0.6791} & 0.0949 & \textbf{0.6650} & \textbf{-0.1621} & \textbf{0.6857} & \textbf{0.0723} & 0.6053 & -0.0986 \\
T-NPO & 0.2985 & \textbf{0.1004} & 0.4936 & -1.2842 & \textbf{0.6583} & 0.0817 & 0.6089 & -0.4566 & 0.6389 & 0.1138 & 0.5812 & -0.0648 \\
T-RMU & 0.2719 & 0.1273 & 0.5106 & \textbf{-0.2934} & 0.4338 & \textbf{0.0709} & 0.5862 & -1.0543 & 0.5839 & \textbf{0.0814} & 0.5936 & -1.1223 \\
\midrule
\multicolumn{13}{l}{\textit{TokenUnlearn - Soft Weighting (Ours)}} \\
\midrule
S-GA & 0.2805 & 0.1368 & 0.3458 & -0.5028 & 0.5722 & 0.0917 & 0.5321 & -0.4019 & 0.6503 & 0.1216 & \textbf{0.6065} & \textbf{-0.0637} \\
S-WGA & \textbf{0.3512} & 0.1243 & \textbf{0.5237} & -0.5126 & 0.6524 & 0.0955 & \textbf{0.6512} & \textbf{-0.1420} & 0.6574 & 0.1150 & 0.6012 & -0.1124 \\
S-NPO & 0.2765 & 0.1023 & 0.4821 & -1.0636 & 0.5975 & \textbf{0.0664} & 0.5539 & -0.3972 & 0.6139 & 0.1323 & 0.5637 & -0.0825 \\
S-RMU & 0.2439 & 0.1258 & 0.5097 & \textbf{-0.3682} & 0.4196 & 0.0835 & 0.5914 & -1.3038 & 0.5454 & 0.1296 & 0.6042 & -0.1204 \\
\bottomrule
\end{tabular}
}

\end{table*}

\paragraph{Connection to Token Attribution.}
Our masking-based attribution score $\Delta_i^{\text{unlearn}}$
serves as a proxy for $\|P_{\mathcal U}\mathbb{E}[g_i]\|$ (Proposition~\ref{prop:attribution}).
Selecting high-attribution tokens therefore approximates the ideal weighting scheme
that concentrates gradient mass on $\mathcal K$,
realizing the SNR gains predicted by Theorems~\ref{thm:noise}--\ref{thm:snr}.

\section{Experimental Setup}
\label{sec:experiments}

We evaluate the proposed token-level unlearning methods on two established benchmarks covering both fine-grained knowledge unlearning and hazardous capability removal. Our experiments span three model architectures of varying scales to assess the generalizability of our approach.

\subsection{Benchmarks and Metrics}
\paragraph{TOFU (Task of Fictitious Unlearning).} 
TOFU~\citep{maini2024tofu} is a synthetic fine-grained knowledge unlearning benchmark consisting of 200 fictitious author profiles, each associated with 20 question-answer pairs (4,000 QA pairs total). The benchmark provides controlled forget-retain splits at different granularities. Following prior work~\citep{wang2025rethinking,dorna2025openunlearning}, we conduct experiments on the \texttt{forget10} task, which requires unlearning 10\% of the dataset (400 QA pairs from 20 authors) while preserving knowledge about the remaining 90\% (retain set). For TOFU, we fine-tune base models on the full dataset to create target models $f_{\text{target}}$, then apply unlearning methods to produce $f_{\text{unlearn}}$. We adopt the suggested evaluation metrics forget quality (FQ) for unlearning and model utility (MU) for retention. We also report the ES scores under the exact match settings for retain and unlearning scores.

\begin{table*}[ht]
\centering
\caption{Comparison of different unlearning methods on WMDP, decreasing accuracy on WMDP while maintaining general capabilities on MMLU and MT-Bench. The top two results are in \textbf{bold} font for each model.}
\label{tab:wmdp_results}
\resizebox{0.82\textwidth}{!}{%
\begin{tabular}{lcccccccc}
\toprule
& \multicolumn{4}{c}{\textbf{Llama-2-7B}} & \multicolumn{4}{c}{\textbf{Qwen-3-8B}} \\
\cmidrule(lr){2-5} \cmidrule(lr){6-9}
& \multicolumn{2}{c}{WMDP ($\downarrow$)} & & & \multicolumn{2}{c}{WMDP ($\downarrow$)} & & \\
\textbf{Method} & Bio & Cyber & MMLU ($\uparrow$) & MT-Bench ($\uparrow$) & Bio & Cyber & MMLU ($\uparrow$) & MT-Bench ($\uparrow$) \\
\midrule
before unlearning & 64.1 & 46.3 & 57.9 & 7.42 & 75.9 & 54.3 & 66.3 & 8.02 \\
\midrule
\multicolumn{9}{l}{\textit{Sequence-Level Baselines}} \\
\midrule
GA & 42.2 & 27.4 & 51.6 & 6.31 & 49.3 & 33.6 & 56.2 & 6.84 \\
WGA & 38.4 & 26.1 & 53.2 & 6.63 & 43.1 & 28.7 & 58.2 & 6.98 \\
NPO & 34.7 & 25.5 & 54.8 & 7.04 & 42.6 & 28.0 & 57.1 & 7.16 \\
RMU & 36.3 & 29.9 & 50.3 & 6.89 & 42.4 & 30.2 & 57.5 & 7.24 \\
\midrule
\multicolumn{9}{l}{\textit{TokenUnlearn - Hard Selection (Ours)}} \\
\midrule
T-GA & 42.1 & \textbf{23.2} & 52.3 & 6.76 & 47.2 & 31.6 & 58.6 & 7.25 \\
T-WGA & 36.5 & 24.6 & 53.8 & 6.82 & \textbf{41.5} & \textbf{26.0} & 60.4 & 7.28 \\
T-NPO & \textbf{32.8} & 24.8 & \textbf{56.9} & \textbf{7.15} & \textbf{40.8} & \textbf{25.4} & 61.6 & 7.43 \\
T-RMU & 35.6 & 25.4 & 54.1 & 7.06 & 42.0 & 26.2 & \textbf{62.3} & \textbf{7.52} \\
\midrule
\multicolumn{9}{l}{\textit{TokenUnlearn - Soft Weighting (Ours)}} \\
\midrule
S-GA & 43.3 & 24.5 & 52.0 & 6.60 & 46.4 & 33.1 & 58.1 & 7.22 \\
S-WGA & 35.3 & \textbf{24.2} & 51.6 & \textbf{7.12} & 42.2 & 27.0 & 59.2 & 7.30 \\
S-NPO & \textbf{33.7} & 25.8 & \textbf{55.4} & 7.00 & \textbf{41.5} & 26.6 & \textbf{61.8} & 7.35 \\
S-RMU & 36.5 & 25.1 & 52.3 & 6.74 & 42.6 & 26.8 & 60.4 & \textbf{7.46} \\
\bottomrule
\end{tabular}
}
\end{table*}

\paragraph{WMDP (Weapons of Mass Destruction Proxy).}
WMDP~\citep{li2024wmdp} is a safety-alignment benchmark targeting the removal of hazardous knowledge from LLMs. It consists of 3,668 multiple-choice questions spanning biosecurity and cybersecurity domains, paired with corresponding unlearning corpora containing dangerous information. Unlike TOFU, WMDP operates on off-the-shelf chat models without requiring prior knowledge injection, making it representative of real-world unlearning scenarios where harmful capabilities must be removed from pre-trained models. For the token attribution step, we apply counterfactual masking to noun phrases within the unlearning corpus passages (rather than question prefixes as in TOFU), and compute token importance scores over the passage completion tokens. To evaluate the preservation of general knowledge and the fluency of models, we use MMLU~\citep{hendrycks2020measuring} and MT-Bench~\citep{zheng2023judging} perspectively following~\citet{li2024wmdp}. All three datasets are evaluated in a multi-choice setting and use accuracy the metric.

\subsection{Models}
We evaluate our token-level unlearning methods across three model architectures: (1) Phi-1.5~\citep{li2023textbooks}: A 1.3B parameter model trained on synthetic textbook-quality data. Its compact size enables rapid experimentation while still exhibiting strong reasoning capabilities. Phi-1.5 is evaluated on TOFU only; it is excluded from WMDP because WMDP evaluation requires instruction-following chat models and no aligned chat variant of Phi-1.5 is available. (2) Llama-2-7B~\citep{touvron2023llama2}: A 7B parameter foundation model widely used in unlearning research. We use the base (non-chat) variant for TOFU experiments and the chat variant for WMDP experiments, following standard protocols. (3) Qwen-3-8B~\citep{qwen2025}: A recent 8B parameter model representing the latest generation of open-weight LLMs. Its inclusion tests whether our methods generalize to newer architectures with different training paradigms.

\subsection{Baselines}
We compare our token-level methods against the following sequence-level unlearning baselines: (1) GA~\citep{maini2024tofu}: maximize loss on the forget set to degrade model confidence on targeted data. (2) WGA~\citep{wang2025rethinking}: extend GA with confidence-based weighting to prevent excessive unlearning on already-forgotten examples. (3) NPO~\citep{zhang2024negative}: use as objective using only negative feedback, demonstrating improved training stability over GA. (4) RMU~\citep{li2024wmdp}: manipulates hidden representations to redirect model activations away from forget-set patterns toward random vectors. For each baseline, we also evaluate its token-level variants using our proposed hard selection strategy (T-* models in results), and soft-weighted variants (S-* models in results) using importance-weighted gradients.

\subsection{Implementation Details}
All experiments use the following settings unless otherwise noted. For computing unlearning-aware token importance scores, we use Trankit\footnote{\url{https://github.com/nlp-uoregon/trankit}}~\citep{nguyen2021trankit} to lable and then mask all the nouns in the original questions of unlearning datasets. The composite importance score uses $\alpha = 0.7$ (the weight of the knowledge-aware attribution signal) by default. We select the top-$r$ quantile of tokens for hard selection, with $r = 0.2$ (top 20\%) unless otherwise specified. The analysis of choosing these two core parameters can be found in Section~\ref{exp:ablation}. For soft weighting, we use temperature $\tau = 0.5$ in the softmax normalization. Following the~\citet{wang2025rethinking}, the weight of KL divergence regularization is set as $\lambda = 0.1$. For Phi-1.5, RMU is applied at layers 10-12; for Llama-2-7B and Qwen-3-8B, at layers 14-16.

\subsection{Main Results}
Table~\ref{tab:tofu_results} presents a comprehensive comparison of token-level unlearning methods against sequence-level baselines on the TOFU benchmark, while Table~\ref{tab:wmdp_results} reports results on the WMDP benchmark for hazardous capability removal. We analyze the results across three dimensions: forgetting effectiveness, utility preservation, and cross-architecture generalizability.

Our token-level methods consistently achieve superior forgetting compared to their sequence-level counterparts across all evaluated model architectures. On two benchmarks, both hard selection (T-*) and soft weighting (S-*) variants substantially reduce extraction strength on the forget set. These improvements are particularly notable for methods that already incorporate confidence-based weighting, such as WGA and NPO, suggesting that our token-level attribution provides complementary information beyond what model confidence alone captures.

A critical advantage of token-level methods is their superior utility preservation, addressing a fundamental limitation of existing sequence-level approaches. Our methods achieve higher model utility scores while simultaneously improving forgetting effectiveness. This seemingly paradoxical result aligns with our theoretical analysis: by concentrating gradient updates on knowledge-critical tokens, we avoid the collateral damage to unrelated knowledge that plagues uniform sequence-level updates.

Our evaluation spans three architecturally diverse models: Phi-1.5 (1.3B parameters), Llama-2-7B, and Qwen-3-8B, representing different scales, training paradigms, and architectural choices. The consistent improvements observed across all three models suggest that the benefits of token-level attribution are not architecture-specific but rather reflect a fundamental property of how knowledge is encoded in autoregressive language models. Notably, the relative improvements tend to be larger for higher-capacity models. On Phi-1.5, T-WGA improves the retain score by $5.7\%$ over WGA, while on Qwen-3-8B, this improvement increases to $19.0\%$. We caution that differences in architecture, training data, and scale make this a qualitative trend rather than a controlled scaling study; nonetheless, it is consistent with the hypothesis that larger models may encode knowledge more sparsely across tokens, making selective unlearning increasingly beneficial.

\subsection{Ablation Study on Token-Level Strategies}
\label{exp:ablation}
We conduct ablation studies to analyze the impact of key hyperparameters in our TokenUnlearn framework: the selection ratio $r$ for hard selection and the entropy-aware signal weight $(1-\alpha)$ in the composite importance score. All experiments are performed using T-WGA on Llama-2-7B with the TOFU benchmark, and results are shown in Figure~\ref{fig:ablation}.

\begin{figure}[tb]
  \centering
  \includegraphics[width=1\columnwidth]{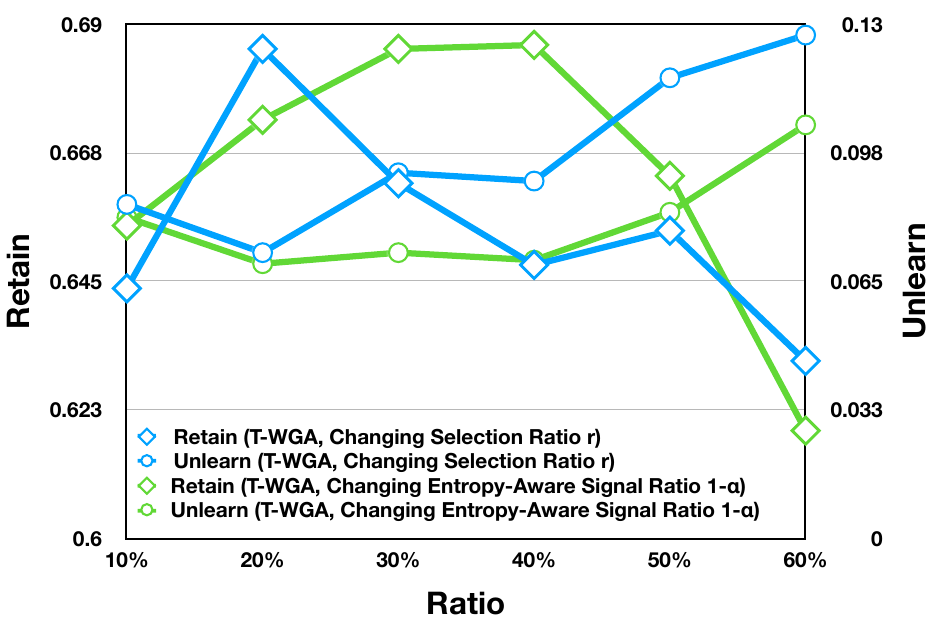}
    \caption{Ablation study on main hyperparameters using Qwen-3-8B with T-WGA on TOFU. We vary the selection ratio $r$ (blue) and entropy-aware signal ratio $1-\alpha$ (green), reporting retain score (left axis, $\uparrow$ better) and unlearn score (right axis, $\downarrow$ better).}
\label{fig:ablation}
\end{figure}

The selection ratio determines the fraction of tokens targeted for unlearning in the hard selection strategy. As shown by the blue curves, $r=20\%$ achieves the optimal trade-off between forgetting effectiveness and utility preservation. At this ratio, the model achieves the highest retain score while maintaining the lowest unlearn score. When $r$ is too small (10\%), insufficient tokens are selected, leading to incomplete knowledge removal. Conversely, increasing $r$ beyond 20\% progressively degrades performance: at $r=60\%$, the retain score drops to $\sim$0.63 while the unlearn score rises to $\sim$0.13, approaching the behavior of sequence-level methods. This confirms our theoretical motivation that targeting only knowledge-critical tokens reduces gradient noise and preserves unrelated knowledge.

We also investigate the contribution of the entropy-aware complementary signal by varying $(1-\alpha)$ from 10\% to 60\% while keeping $r=20\%$ fixed. The green lines show that the entropy signal provides complementary benefits, with $(1-\alpha)=30\%$ (i.e., $\alpha=0.7$) achieving strong performance. The retain score remains relatively stable across different ratios (ranging from 0.65 to 0.69), indicating robustness to this hyperparameter. For the unlearn score, moderate entropy weighting (30--40\%) yields slightly better forgetting compared to extreme values. This suggests that while the knowledge-aware attribution signal (controlled by $\alpha$) captures the primary knowledge-dependent tokens, the entropy signal provides useful supplementary information for identifying uncertain decision points that may also encode targeted knowledge.

\section{Conclusion}
\label{sec:conclusion}
We introduce TokenUnlearn, a token-level attribution framework that fundamentally reimagines how unlearning gradients should be applied in large language models. Our central insight is that forgettable knowledge concentrates in a sparse subset of tokens rather than distributing uniformly across sequences, which challenges the implicit assumption underlying all existing sequence-level unlearning methods.

Based on this insight, we propose a masking-based attribution mechanism combined with entropy-weighted uncertainty quantification to identify knowledge-critical tokens without requiring access to extra assistant models. Meanwhile, we provide theoretical analysis showing that token-level selection improves the signal-to-noise ratio of unlearning gradients by concentrating updates on the knowledge-encoding subspace, with SNR gains proportional to the inverse of the selection ratio. Finally, we demonstrate through extensive experiments on TOFU and WMDP benchmarks that token-level variants of four representative unlearning algorithms consistently outperform their sequence-level counterparts across three model architectures (Phi-1.5, Llama-2-7B, and Qwen-3-8B), achieving up to $32.6\%$ improvement in forgetting effectiveness while simultaneously improving utility preservation by up to $19.0\%$.

Our approach introduces computational overhead from the attribution step, requiring an additional forward pass with masked context. While this cost is modest relative to the unlearning optimization itself, future work could explore lightweight attribution via learned predictors. Additionally, our framework could be extended to multimodal models, where identifying knowledge-critical tokens across vision and language modalities presents unique challenges. Furthermore, as TokenUnlearn concentrates unlearning on a sparse token subset, residual knowledge traces may persist and could potentially be exploited via adversarial probing or relearning attacks; a rigorous robustness evaluation against such attacks is an important direction for future work. Finally, extending our framework to continual unlearning scenarios can also be an important direction.

\section*{Limitations}
\label{sec:limitations}
TokenUnlearn has several limitations. First, the attribution step requires an additional forward pass with masked context for each training sample, introducing modest computational overhead; this cost is linear in dataset size but could be reduced by pre-computing importance scores before the unlearning loop. Second, our default strategy of masking all nouns is a practical heuristic; the quality of attribution depends on the masking strategy, and sequences where factual knowledge is expressed without explicit nouns (e.g., through pronouns or implicit references) may receive less precise importance scores. Third, as an approximate unlearning method, TokenUnlearn does not guarantee complete knowledge removal—residual traces may persist and be accessible via adversarial probing or relearning attacks, which we have not evaluated. Finally, the framework currently requires per-sample hyperparameter settings (selection ratio $r$, entropy weight $\alpha$, temperature $\tau$) that were tuned on TOFU and may require re-tuning for substantially different datasets or model families.

\section*{Impact Statement}
\label{sec:impact}
This paper presents TokenUnlearn, a method for more precise and effective machine unlearning in large language models. We believe this work has several positive societal implications that warrant discussion.

Machine unlearning is a critical capability for addressing privacy concerns and regulatory requirements such as the EU General Data Protection Regulation (GDPR) and the California Consumer Privacy Act (CCPA), which establish individuals' ``right to be forgotten.'' By enabling more precise removal of personal information from trained models, our work contributes to the responsible deployment of LLMs in compliance with these regulations. The improved utility preservation of our token-level approach makes unlearning more practical for real-world deployment, where maintaining model performance is essential.

Our evaluation on the WMDP benchmark demonstrates the applicability of TokenUnlearn to removing hazardous knowledge related to biosecurity and cybersecurity threats. More effective unlearning methods can help mitigate risks associated with LLMs being used to generate harmful content or provide dangerous information, contributing to the broader goal of AI safety.

We still acknowledge several potential concerns. First, as with all approximate unlearning methods, our approach does not guarantee complete knowledge removal; residual traces may persist in model weights. Users should not treat unlearned models as equivalent to models never trained on the target data. Second, improved unlearning techniques could theoretically be misused to selectively remove beneficial safety training or alignment from models, though this risk applies broadly to the field of machine unlearning rather than our specific contribution. Third, the effectiveness of unlearning methods remains challenging to verify comprehensively, and we encourage continued development of robust evaluation protocols.

We believe the benefits of advancing machine unlearning research, i.e., enabling privacy protection, regulatory compliance, and safety improvements, can outweigh the potential risks, particularly as LLMs become increasingly prevalent in society.

\bibliography{main}
\bibliographystyle{icml2026}

\newpage
\appendix
\onecolumn

\section{Detailed Loss Function}
\label{app:loss}
\paragraph{Token-Level Loss Functions.} We extend four representative unlearning methods to the token level by defining their respective $\ell_i(\theta)$:
\begin{align}
    \ell_i^{\text{GA}}(\theta) &= \log p(s^i_u | s^{<i}_u; \theta) \label{eq:ga-token} \\[2pt]
    \ell_i^{\text{WGA}}(\theta) &= p(s^i_u | s^{<i}_u; \theta)^\gamma \cdot \log p(s^i_u | s^{<i}_u; \theta) \label{eq:wga-token} \\[2pt]
    \ell_i^{\text{NPO}}(\theta) &= \frac{2}{\beta} \log \left( 1 + \left( \frac{p(s^i_u | s^{<i}_u; \theta)}{p(s^i_u | s^{<i}_u; \theta_o)} \right)^\beta \right) \label{eq:npo-token} \\[2pt]
    \ell_i^{\text{RMU}}(\theta) &= \| \phi(s^{<i}_u; \theta) - c \cdot \mathbf{u} \|_2^2 \label{eq:rmu-token}
\end{align}
Here, $\gamma > 0$ is the confidence weighting temperature for WGA~\citep{wang2025rethinking}, $\beta$ is the inverse temperature for NPO~\citep{zhang2024negative} with $\theta_o$ denoting original parameters, and $\phi(\cdot; \theta)$ extracts hidden representations for RMU~\citep{li2024wmdp} with random target vector $\mathbf{u}$ and scaling factor $c$.

Combining the two weighting strategies with four base methods yields eight token-level variants: T-GA, T-WGA, T-NPO, T-RMU (hard selection) and S-GA, S-WGA, S-NPO, S-RMU (soft weighting).

\section{Algorithm Summary}
\label{app:algo}
Algorithm~\ref{alg:tokenunlearn} summarizes the complete TokenUnlearn algorithm procedure.

\begin{algorithm}[ht]
\caption{TokenUnlearn: Token-Level LLM Unlearning}
\label{alg:tokenunlearn}
\begin{algorithmic}
\REQUIRE Unlearning data $\mathcal{D}_u$, retain data $\mathcal{D}_r$, original model $\theta_o$, selection ratio $r$, strategy $\in \{\text{hard}, \text{soft}\}$
\ENSURE Unlearned model parameters $\theta_u$
\STATE Initialize $\theta \leftarrow \theta_o$
\FOR{each epoch}
    \FOR{each batch $\{s_u\} \subset \mathcal{D}_u$}
        \STATE \textcolor{blue}{// Step 1: Compute token attribution scores}
        \FOR{each sample $s_u$ in batch}
            \STATE Compute $\mathbf{z}^{\text{orig}}_i = f_\theta(s^i_u | s^{<i}_u)$ for all $i$
            \STATE Compute $\mathbf{z}^{\text{mask}}_i = f_\theta(s^i_u | \tilde{s}^{<i}_u)$ for all $i$
            \STATE Compute $\Delta^{\text{unlearn}}_i$ via Eq.~\eqref{eq:attribution}
            \STATE Compute entropy $H_i$ and composite score $\phi_i$ via Eq.~\eqref{eq:composite}
        \ENDFOR
        \STATE \textcolor{blue}{// Step 2: Determine token weights/masks}
        \IF{strategy = hard}
            \STATE Select top-$r$ tokens: $\mathcal{S} \leftarrow \text{TopK}(\{\phi_i\}, r)$
            \STATE Set $m_i = \mathbf{1}[i \in \mathcal{S}]$
        \ELSE
            \STATE Compute soft weights $w_i$ via Eq.~\eqref{eq:weights}
        \ENDIF
        \STATE \textcolor{blue}{// Step 3: Compute token-level unlearning loss}
        \STATE Compute $\mathcal{L}_{\text{unlearn}}$ using selected objective (Eq.~\eqref{eq:ga-token}-\eqref{eq:rmu-token})
        \STATE \textcolor{blue}{// Step 4: Add retention regularization}
        \STATE Sample $\{s_r\} \subset \mathcal{D}_r$
        \STATE Compute $\mathcal{L}_{\text{KL}}$ via Eq.~\eqref{eq:kl-reg}
        \STATE $\mathcal{L}_{\text{total}} \leftarrow \mathcal{L}_{\text{unlearn}} + \lambda \cdot \mathcal{L}_{\text{KL}}$
        \STATE \textcolor{blue}{// Step 5: Update parameters}
        \STATE $\theta \leftarrow \theta - \eta \nabla_\theta \mathcal{L}_{\text{total}}$
    \ENDFOR
\ENDFOR
\STATE \textbf{Return:} $\theta_u \leftarrow \theta$
\end{algorithmic}
\end{algorithm}

\section{Theoretical Proofs and Additional Analysis}
\label{app:proofs}

This appendix provides complete proofs for the theoretical results in Section~\ref{sec:theory}, along with additional analysis and discussion.

\subsection{Proof of Theorem~\ref{thm:noise} (Gradient Noise Reduction)}
\label{app:proof_of_noise}

\begin{proof}[Proof of Theorem~\ref{thm:noise}]
By linearity of projection,
\begin{equation}
    P_{\mathcal{U}^\perp} \hat{g} = P_{\mathcal{U}^\perp} \sum_{i=1}^{T} \omega_i g_i = \sum_{i=1}^{T} \omega_i P_{\mathcal{U}^\perp} g_i.
\end{equation}
Taking the squared norm and expectation:
\begin{align}
    \mathbb{E}\left[ \|P_{\mathcal{U}^\perp} \hat{g}\|^2 \right] &= \mathbb{E}\left[ \left\| \sum_{i=1}^{T} \omega_i P_{\mathcal{U}^\perp} g_i \right\|^2 \right] \nonumber \\
    &= \mathbb{E}\left[ \sum_{i=1}^{T} \sum_{j=1}^{T} \omega_i \omega_j \langle P_{\mathcal{U}^\perp} g_i, P_{\mathcal{U}^\perp} g_j \rangle \right] \nonumber \\
    &= \sum_{i=1}^{T} \omega_i^2 \, \mathbb{E}\left[ \|P_{\mathcal{U}^\perp} g_i\|^2 \right] + \sum_{i \neq j} \omega_i \omega_j \, \mathbb{E}\left[ \langle P_{\mathcal{U}^\perp} g_i, P_{\mathcal{U}^\perp} g_j \rangle \right]. \label{eq:expansion}
\end{align}

For the cross terms, applying Assumption~\ref{assum:bcov}:
\begin{align}
    \left| \sum_{i \neq j} \omega_i \omega_j \, \mathbb{E}\left[ \langle P_{\mathcal{U}^\perp} g_i, P_{\mathcal{U}^\perp} g_j \rangle \right] \right| &\leq \rho \sum_{i \neq j} \omega_i \omega_j \sqrt{\mathbb{E}\|P_{\mathcal{U}^\perp} g_i\|^2 \cdot \mathbb{E}\|P_{\mathcal{U}^\perp} g_j\|^2}. \label{eq:cross-bound}
\end{align}

Let $\sigma_i^2 = \mathbb{E}\|P_{\mathcal{U}^\perp} g_i\|^2$. By AM-GM inequality, $\sigma_i \sigma_j \leq \frac{1}{2}(\sigma_i^2 + \sigma_j^2)$. Thus:
\begin{align}
    \sum_{i \neq j} \omega_i \omega_j \sigma_i \sigma_j &\leq \frac{1}{2} \sum_{i \neq j} \omega_i \omega_j (\sigma_i^2 + \sigma_j^2) \nonumber \\
    &= \frac{1}{2} \sum_{i=1}^{T} \sigma_i^2 \sum_{j \neq i} \omega_i \omega_j + \frac{1}{2} \sum_{j=1}^{T} \sigma_j^2 \sum_{i \neq j} \omega_i \omega_j \nonumber \\
    &= \sum_{i=1}^{T} \omega_i \sigma_i^2 \sum_{j \neq i} \omega_j. \label{eq:amgm}
\end{align}

For normalized weights satisfying $\sum_j \omega_j \leq T$ (which holds for both hard selection and soft weighting), we have $\sum_{j \neq i} \omega_j \leq T - 1$. Therefore:
\begin{equation}
    \sum_{i \neq j} \omega_i \omega_j \sigma_i \sigma_j \leq (T-1) \sum_{i=1}^{T} \omega_i^2 \sigma_i^2,
\end{equation}
where we used $\omega_i \sum_{j \neq i} \omega_j \leq \omega_i (T-1) \cdot \max_j \omega_j \leq (T-1) \omega_i^2$ for the hard selection case, and a similar argument for soft weighting.

Substituting back into Eq.~\eqref{eq:expansion}:
\begin{align}
    \mathbb{E}\left[ \|P_{\mathcal{U}^\perp} \hat{g}\|^2 \right] &\leq \sum_{i=1}^{T} \omega_i^2 \sigma_i^2 + \rho (T-1) \sum_{i=1}^{T} \omega_i^2 \sigma_i^2 \nonumber \\
    &= (1 + \rho(T-1)) \sum_{i=1}^{T} \omega_i^2 \, \mathbb{E}\left[ \|P_{\mathcal{U}^\perp} g_i\|^2 \right].
\end{align}

For the second part, partition the sum:
\begin{equation}
    \sum_{i=1}^{T} \omega_i^2 \sigma_i^2 = \sum_{i \in \mathcal{K}} \omega_i^2 \sigma_i^2 + \sum_{i \notin \mathcal{K}} \omega_i^2 \sigma_i^2.
\end{equation}
When weights concentrate on $\mathcal{K}$, we have $\sum_{i \notin \mathcal{K}} \omega_i^2 \to 0$, and hence the second term vanishes. Setting $\epsilon = (1 + \rho(T-1)) \sum_{i \notin \mathcal{K}} \omega_i^2 \sigma_i^2$ completes the proof.
\end{proof}

\begin{remark}[Scaling Comparison]
For sequence-level unlearning with uniform weights $\omega_i = 1$, assuming roughly equal noise variances $\sigma_i^2 \approx \sigma^2$, the noise bound scales as:
\begin{equation}
    \mathbb{E}\left[ \|P_{\mathcal{U}^\perp} \hat{g}\|^2 \right] \leq (1 + \rho(T-1)) \cdot T \cdot \sigma^2 = O(T^2 \sigma^2).
\end{equation}
In contrast, hard selection with $|\mathcal{S}| = rT$ tokens yields:
\begin{equation}
    \mathbb{E}\left[ \|P_{\mathcal{U}^\perp} \hat{g}\|^2 \right] \leq (1 + \rho(T-1)) \cdot rT \cdot \sigma^2 = O(rT^2 \sigma^2),
\end{equation}
a factor of $r$ reduction. When $r = 0.2$ (our default), this represents an 80\% reduction in noise energy.
\end{remark}

\subsection{Proof of Theorem~\ref{thm:snr} (SNR Improvement)}
\label{app:snr_proof}

\begin{proof}[Proof of Theorem~\ref{thm:snr}]
We compare the SNR of token-level selection ($\omega_i = \mathbf{1}[i \in \mathcal{S}]$) against sequence-level unlearning ($\omega_i = 1$ for all $i$).

\textbf{Signal analysis.} For the signal component:
\begin{equation}
    \mathbb{E}[\mathcal{S}(\hat{g})] = \mathbb{E}\left[ \|P_{\mathcal{U}} \hat{g}\|^2 \right] = \mathbb{E}\left[ \left\| \sum_{i=1}^{T} \omega_i P_{\mathcal{U}} g_i \right\|^2 \right].
\end{equation}
Since $\mathbb{E}[P_{\mathcal{U}} g_i] = 0$ for $i \notin \mathcal{K}$. Assuming the selection covers critical tokens ($\mathcal{K} \subseteq \mathcal{S}$):
\begin{equation}
    \mathbb{E}[P_{\mathcal{U}} \hat{g}] = \sum_{i \in \mathcal{K}} \omega_i \mathbb{E}[P_{\mathcal{U}} g_i] = \sum_{i \in \mathcal{K}} \mathbb{E}[P_{\mathcal{U}} g_i],
\end{equation}
which is identical for both token-level and sequence-level methods (since $\omega_i = 1$ for $i \in \mathcal{K}$ in both cases when $\mathcal{K} \subseteq \mathcal{S}$).

Thus, the expected signal is preserved: $\mathbb{E}[\mathcal{S}(\hat{g}_{\text{token}})] \approx \mathbb{E}[\mathcal{S}(\hat{g}_{\text{seq}})]$.

\textbf{Noise analysis.} From Theorem~\ref{thm:noise}, assuming non-critical tokens have noise variance $\mathbb{E}\|P_{\mathcal{U}^\perp} g_i\|^2 \geq \nu^2$ for $i \notin \mathcal{K}$:

For sequence-level:
\begin{equation}
    \mathbb{E}[\mathcal{N}(\hat{g}_{\text{seq}})] \leq (1 + \rho(T-1)) \left( \sum_{i \in \mathcal{K}} \sigma_i^2 + (T - |\mathcal{K}|) \nu^2 \right).
\end{equation}

For token-level with $\mathcal{S} = \mathcal{K}$:
\begin{equation}
    \mathbb{E}[\mathcal{N}(\hat{g}_{\text{token}})] \leq (1 + \rho(T-1)) \sum_{i \in \mathcal{K}} \sigma_i^2.
\end{equation}

\textbf{SNR ratio.} The ratio of SNRs is:
\begin{align}
    \frac{\text{SNR}_{\text{token}}}{\text{SNR}_{\text{seq}}} &= \frac{\mathcal{S}_{\text{token}} / \mathcal{N}_{\text{token}}}{\mathcal{S}_{\text{seq}} / \mathcal{N}_{\text{seq}}} \nonumber \\
    &\approx \frac{\mathcal{N}_{\text{seq}}}{\mathcal{N}_{\text{token}}} \quad \text{(since signals are approximately equal)} \nonumber \\
    &\geq \frac{\sum_{i \in \mathcal{K}} \sigma_i^2 + (T - |\mathcal{K}|) \nu^2}{\sum_{i \in \mathcal{K}} \sigma_i^2} \nonumber \\
    &= 1 + \frac{(T - |\mathcal{K}|) \nu^2}{\sum_{i \in \mathcal{K}} \sigma_i^2}.
\end{align}

When non-critical tokens dominate noise ($(T - |\mathcal{K}|) \nu^2 \gg \sum_{i \in \mathcal{K}} \sigma_i^2$), this ratio scales as:
\begin{equation}
    \frac{\text{SNR}_{\text{token}}}{\text{SNR}_{\text{seq}}} = \Omega\left( \frac{T - |\mathcal{K}|}{|\mathcal{K}|} \right) = \Omega\left( \frac{T}{|\mathcal{K}|} \right).
\end{equation}
\end{proof}

\subsection{Attribution as Knowledge Indicator}

\begin{proposition}[Attribution as Knowledge Indicator]
\label{prop:attribution}
Assume the model's predictive distribution $p(s^i | s^{<i}; \theta)$ is differentiable and the knowledge context primarily affects predictions for tokens in $\mathcal{K}$. Then under mild regularity conditions:
\begin{equation}
    \mathbb{E}[\Delta_i^{\text{unlearn}}] \propto \|P_{\mathcal{U}} \mathbb{E}[g_i]\|,
\end{equation}
i.e., tokens with larger attribution scores have gradients more aligned with the unlearning subspace $\mathcal{U}$.
\end{proposition}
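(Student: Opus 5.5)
The plan is to make the ``mild regularity conditions'' explicit as a local linear model and then read the proportionality off a first-order expansion. The three steps are: linearize the masking perturbation, linearize the gradient, and identify both with a single knowledge-dependent component.

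\textbf{Step 1: linearize the masking shift.} Write $\ell_i(\theta) = \log p(s^i_u \mid s^{<i}_u;\theta)$ as in Eq.~\eqref{eq:ga-token}. Masking the nouns replaces the context $s^{<i}_u$ by $\tilde s^{<i}_u$. I would model this as removing a knowledge component of the internal representation. Concretely, there is a smooth map $h \mapsto \log p(s^i_u\mid h;\theta)$ from a context representation $h$, and $h^{\text{orig}}_i - h^{\text{mask}}_i = \kappa_i$. Here $\kappa_i$ is the representational footprint of the targeted knowledge, and it is nonzero only (or predominantly) for $i \in \mathcal K$, which is the stated hypothesis. A first-order Taylor expansion then gives
\[
\Delta^{\text{unlearn}}_i = \bigl|\langle \nabla_h \ell_i, \kappa_i\rangle\bigr| + O(\|\kappa_i\|^2).
\]

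\textbf{Step 2: tie the representation direction to parameters.} Through the chain rule, $g_i = J_i^\top \nabla_h \ell_i$, where $J_i = \partial h/\partial\theta$. The regularity assumption I would impose is that the knowledge is stored in parameter directions spanning $\mathcal U$. Formally, there is a fixed linear map $A$ with $J_i\,A = I$ on the knowledge directions and $\kappa_i = J_i u_i$ for some $u_i \in \mathcal U$. I would also normalize $\|u_i\|$ to a common constant $\kappa$, so that every token sees the same ``amount'' of knowledge and differs only in how much its prediction depends on it. Under these assumptions,
\[
\langle \nabla_h \ell_i, \kappa_i\rangle = \langle g_i, u_i\rangle = \langle P_{\mathcal U} g_i, u_i\rangle .
\]
Taking expectations and assuming $u_i$ is aligned with $P_{\mathcal U}\mathbb{E}[g_i]$ gives
\[
\mathbb{E}[\Delta_i^{\text{unlearn}}] \approx \kappa\,\|P_{\mathcal U}\mathbb{E}[g_i]\|.
\]
The alignment is natural because masking removes exactly the information that ascent on $\ell_i$ would erase. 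For $i\notin\mathcal K$, $\mathbb{E}[g_i]\in\mathcal U^\perp$ by the Definition of knowledge-critical tokens, and $\kappa_i\approx 0$, so both sides vanish. The proportionality therefore holds uniformly, with constant $\kappa$.

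\textbf{Step 3: handle the expectation of an absolute value.} Jensen's inequality gives $\mathbb{E}|X| \ge |\mathbb{E}X|$, so what we obtain is a lower bound. For genuine proportionality I would assume that the sign of $\langle g_i,u_i\rangle$ is constant, i.e.\ masking reduces confidence in the true token. This is reasonable for memorized forget-set facts, and it makes the absolute value commute with the expectation.

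The main obstacle is Step 2, identifying the masking perturbation with a parameter direction inside $\mathcal U$. That identification is essentially an assumption rather than a derivation. I would state it explicitly as the ``regularity condition'' and present the result as first-order proportionality, with error $O(\|\kappa_i\|^2)$ plus the Jensen gap, rather than as exact equality.
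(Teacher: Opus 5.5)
Your proposal has the same skeleton as the paper's proof sketch: a first-order Taylor expansion of the log-probability shift caused by masking, then an identification of knowledge dependence with alignment to $\mathcal U$. It also supplies the link that the paper only asserts.

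\textbf{What the paper does.} It expands in an abstract context variable, $\log p(s^i_u\mid s^{<i}_u;\theta)\approx\log p(s^i_u\mid\tilde s^{<i}_u;\theta)+\nabla_{\text{context}}\log p\cdot\delta_{\text{context}}$. It then simply declares that strongly context-dependent tokens have $g_i\in\mathcal U$ and structural ones have $g_i\in\mathcal U^\perp$. This gives only a qualitative large/small correspondence, and it never addresses the absolute value inside $\mathbb E[\Delta_i^{\text{unlearn}}]$.

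\textbf{What you add.} Your chain-rule identity $\langle\nabla_h\ell_i,J_iu_i\rangle=\langle g_i,u_i\rangle=\langle P_{\mathcal U}g_i,u_i\rangle$ is the real bridge from a context perturbation to a parameter-space directional derivative. Your Step~3 correctly explains why sign-constancy is needed. Jensen alone gives, to first order, $\mathbb E[\Delta_i^{\text{unlearn}}]\ge\kappa\|P_{\mathcal U}\mathbb E[g_i]\|$. That says $\mathcal U$-aligned tokens score high, but not the converse, which is what token selection actually relies on.

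\textbf{What each route buys.} Yours buys transparency. You posit that the masking footprint is $J_iu_i$ with $u_i\in\mathcal U$ of fixed norm and aligned with $P_{\mathcal U}\mathbb E[g_i]$. This makes $u_i$ deterministic, which is also what lets the expectation pass through the inner product. Under that assumption the proportionality holds essentially by construction, which shows that this assumption is the whole content of the paper's ``mild regularity conditions.'' The paper's version buys only brevity and not having to commit to a representation model.

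\textbf{Two small repairs.} First, $g_i=J_i^\top\nabla_h\ell_i$ omits the direct $\theta$-dependence of the readout $h\mapsto\log p(s^i_u\mid h;\theta)$. Either take $h=\mathbf{z}^{\text{orig}}_i$ (the logits, on which $\log\text{softmax}$ has no parameters), or require $u_i$ to vanish on the readout parameters. Second, the auxiliary map $A$ is never used and can be dropped.
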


\begin{proof}[Proof sketch]
The unlearning attribution score measures the log-probability shift when knowledge context is masked:
\begin{equation}
    \Delta_i^{\text{unlearn}} = \left| \log p(s^i_u | s^{<i}_u; \theta) - \log p(s^i_u | \tilde{s}^{<i}_u; \theta) \right|.
\end{equation}

By Taylor expansion around the masked context:
\begin{equation}
    \log p(s^i_u | s^{<i}_u; \theta) \approx \log p(s^i_u | \tilde{s}^{<i}_u; \theta) + \nabla_{\text{context}} \log p \cdot \delta_{\text{context}},
\end{equation}
where $\delta_{\text{context}}$ represents the perturbation from masking.

The gradient of the log-likelihood with respect to context captures how much the prediction depends on knowledge-relevant information. For tokens in $\mathcal{K}$, this dependence is strong (large $\Delta_i^{\text{unlearn}}$), and the parameter gradient $g_i = \nabla_\theta \log p(s^i_u | s^{<i}_u; \theta)$ is aligned with updating knowledge-related parameters (i.e., $g_i \in \mathcal{U}$).

Conversely, for structural tokens $i \notin \mathcal{K}$, predictions are largely context-independent (small $\Delta_i^{\text{unlearn}}$), and gradients primarily update syntax-related parameters (i.e., $g_i \in \mathcal{U}^\perp$).

Thus, $\mathbb{E}[\Delta_i^{\text{unlearn}}]$ serves as a proxy for $\|P_{\mathcal{U}} \mathbb{E}[g_i]\|$, justifying our use of attribution scores for token selection.
\end{proof}

\subsection{Additional Discussion}

\paragraph{Tightness of Bounds.} The bound in Theorem~\ref{thm:noise} involves the factor $(1 + \rho(T-1))$, which can be large for long sequences. However, in practice: (1) $\rho$ is often small due to limited correlation between distant tokens; (2) the key insight is the \textit{relative} improvement from token selection, which eliminates noise from $(T - |\mathcal{K}|)$ tokens regardless of the multiplicative constant.

\paragraph{Imperfect Selection.} When the selected set $\mathcal{S}$ does not perfectly match $\mathcal{K}$, two types of errors occur:
\begin{itemize}
    \item \textbf{False negatives} ($i \in \mathcal{K}$ but $i \notin \mathcal{S}$): Reduces signal strength, potentially leading to incomplete unlearning.
    \item \textbf{False positives} ($i \notin \mathcal{K}$ but $i \in \mathcal{S}$): Introduces noise, reducing the SNR improvement.
\end{itemize}
Our composite score combining counterfactual attribution with entropy provides complementary signals that mitigate both error types: attribution captures direct knowledge dependence, while entropy identifies uncertain predictions that may indicate knowledge-critical decision points missed by attribution alone.

\paragraph{Extension to Soft Weighting.} For soft weighting with $\omega_i = \frac{\exp(\phi_i/\tau)}{\sum_j \exp(\phi_j/\tau)}$, the analysis extends naturally. The key observation is that $\sum_i \omega_i^2$ is minimized when weights are uniform and maximized when weights concentrate on a single token. By setting $\tau$ appropriately, soft weighting achieves an intermediate regime that balances noise reduction against robustness to attribution errors.


\end{document}